\def\BibTeX{{\rm B\kern-.05em{\sc i\kern-.025em b}\kern-.08em
    T\kern-.1667em\lower.7ex\hbox{E}\kern-.125emX}}
\algnewcommand\algorithmicforeach{\textbf{for each}}
\newtheorem{theorem}{Theorem}[section]
\theoremstyle{remark}
\newtheorem{remark}[theorem]{Remark}
\numberwithin{equation}{section}
\newcommand{\indep}{\rotatebox[origin=c]{90}{$\models$}}
\begin{document}

\title{Policy Gradient Optimal Correlation Search for Variance Reduction in Monte Carlo simulation and Maximum Optimal Transport\\
}


\author{\IEEEauthorblockN{Pierre Bras}
\IEEEauthorblockA{\textit{Laboratoire de Probabilités, Statistique et Modélisation} \\
\textit{Sorbonne Université}\\
Paris, France \\
pierre.bras@sorbonne-universite.fr}
\and
\IEEEauthorblockN{Gilles Pagès}
\IEEEauthorblockA{\textit{Laboratoire de Probabilités, Statistique et Modélisation} \\
\textit{Sorbonne Université}\\
Paris, France \\
gilles.pages@sorbonne-universite.fr}
}

\maketitle

\begin{abstract}
We propose a new algorithm for variance reduction when estimating $f(X_T)$ where $X$ is the solution to some stochastic differential equation and $f$ is a test function. The new estimator is $(f(X^1_T) + f(X^2_T))/2$, where $X^1$ and $X^2$ have same marginal law as $X$ but are pathwise correlated so that to reduce the variance. The optimal correlation function $\rho$ is approximated by a deep neural network and is calibrated along the trajectories of $(X^1, X^2)$ by policy gradient and reinforcement learning techniques. Finding an optimal coupling given marginal laws has links with maximum optimal transport.
\end{abstract}

\begin{IEEEkeywords}
Reinforcement Learning, Policy Gradient, Stochastic Differential Equation, Variance Reduction, Monte Carlo, Stochastic Control, Maximum Optimal Transport.
\end{IEEEkeywords}

\section{Introduction}

In this paper we focus on the estimation of $f(X_T)$ by Monte Carlo methods, where $X$ is the solution of the Stochastic Differential Equation (SDE):
\begin{equation}
\label{eq:def_X}
X_0 \in \mathbb{R}^d, \quad dX_t = b(X_t) dt + \sigma(X_t) dW_t,
\end{equation}
where $W$ is a Brownian motion.
We propose the new estimator
\begin{equation}
\label{eq:new_estimator}
\frac{1}{2}\left( f(X^1_T) + f(X^2_T) \right),
\end{equation}
where $X^1$ and $X^2$ have the same marginal law which as $X$ but are correlated as follows:
\begin{align}
\label{eq:X1t}
& dX^1_t = b(X^1_t) dt + \sigma(X^1_t) dW^1_t, \\
\label{eq:X2t}
& dX^2_t = b(X^2_t) dt + \sigma(X^2_t) dW^2_t,
\end{align}
where $W^1$ and $W^2$ are two Brownian motion correlated by some adapted process $\rho$ such that
\begin{equation}
W^2_t = \int_0^t \rho_u dW^1_u + \int_0^t (I - \rho_u \rho_u^\top)^{1/2} dW^3_u 
\end{equation}
where $I$ stands for the identity matrix and $W^3$ is a Brownian motion independent from $W^1$. This introduces a new class of control varietes for SDEs. The solutions to \eqref{eq:X1t} and \eqref{eq:X2t} are then approximated by an Euler-Maruyama numerical scheme and the fact that $X^1$ and $X^2$ are constructed so that they have the same marginal law as $X$ guarantees that the bias of our estimator is the same as the vanilla estimator and is equal to the bias of the discretized scheme.
Considering the constant correlation $\rho_t=-I$ boils down to $W^1=-W^2$ giving the classic antithetic scheme and has proved its efficiency for variance reduction, see \cite{hammersley1956}, \cite[Chapter 3.1.2]{pages2018} as well as Theorem \ref{thm:antithetic} in the Appendix.
The correlation process $\rho$ is calibrated such that to minimize the resulting variance of the new estimator \eqref{eq:new_estimator}, which is done by policy gradient and reinforcement learning techniques.

\smallskip

Reinforcement Learning (RL) studies how an intelligent agent should take actions in an environment in order to maximize its cumulative reward. In the case of Markov Decision Processes (MDP), an agent chooses its next action $a_t$ according to its policy $\pi$ depending on its current state $s_t$; it then gets into a next state $s_{t+1}$ depending on the response of the environment and gets the instantaneous reward $r_{t}$, giving the equations
\begin{align}
a_{t} = \pi(s_t), \quad s_{t+1} = \mathscr{S}(s_t, a_{t}), \quad r_{t} = \mathscr{R}(s_t, a_{t}) .
\end{align}
The functions $\mathscr{S}$, $\mathscr{R}$ and $\pi$ may have random components.
The objective of the agent is to maximize its expected total reward:
\begin{equation}
\textstyle \max_{\pi} \ \mathbb{E}\big[ \sum_{t=0}^T \gamma^{t} r_t \big]
\end{equation}
where $\gamma \in (0,1]$ is the discount factor.

Policy Gradient algorithm, in its simplified version, consists in parametrizing the policy of the agent by some finite-dimensional parameter $\theta$ -- in general the weights of a neural network -- and updated with the gradient of the reward; the gradient is tracked all along the trajectory of the process \cite{giles2005,giles2007} or by telescopic increments (see \eqref{eq:reward} in the following).
As we use it in our setting, it requires to know the expression of the environment response and reward functions $\mathscr{S}$ and $\mathscr{R}$, which is the case in our SDE setting where the coefficients $b$ and $\sigma$ of \eqref{eq:def_X} are assumed to be known, but which may not be always the case for general RL.
We also require a large batch size of the training trajectory for policy gradient.
Policy gradient was used for SDE-based stochastic control problems \cite{gobet2005,han2016,wang2019,buehler2019,carmona2021,hure2021,lauriere2021,bachouch2022} along with more traditional methods such as Forward-Backward SDEs \cite{peng1999}, Hamilton-Jacobi-Bellman optimality conditions \cite{bellman2010} through partial differential equations and dynamic programming \cite{kushner2001}.

Other and more sophisticated reinforcement learning algorithms generally use two or more combined neural networks updated by gradient descent, the actor network parametrizing the policy of the agent and the critic network approximating the value of Q-function of the expected future rewards \cite{rl_an_intro,q_learning,atari}:
\begin{equation}
Q(s,a) = \max_\pi \ \mathbb{E}\big[ \sum_{t=t_0}^T \gamma^{t-t_0} r_t \big| \ s_{t_0}=s, a_{t_0}=a \big].
\end{equation}
We also refer to \cite{a3c,ppo,td3}.

\smallskip

Variance reduction for Monte Carlo simulation of SDEs can be achieved by adding a zero-mean but correlated variable to the estimator (see \cite[Chapter 3]{pages2018} and \cite{boyaval2009}) or by changing the coefficients of the SDE while keeping the same marginal law or expectation of the final value \cite{newton1994}.
In \cite{lemaire2010}, the authors add a pre-processing step for importance sampling of a normal-distributed random vector, calibrated by gradient decent.
More recently, \cite{hinds2022} extends the method from \cite{milstein2009} and uses neural instead of linearly regressed control variates for some classes of SDEs.

The idea of reparametrizing the Brownian motion through rotations matrices was for example used in \cite{cruzeiro2004} for reducing bias in this case, where the original Brownian motion is replaced by a rotated Brownian motion thus having same marginal law.

The problem of minimizing a variance of two coupled random vectors with given marginal laws has links with optimal transport and more precisely maximum optimal transport, where we try to maximize the $L^2$-distance between these two random vectors instead of minimizing it \cite{peyre2019}. To our knowledge this is a new application of maximum optimal transport; we refer to the Remark \ref{rk:MOT}.

%
%
%
%
%

\smallskip

In this paper we implement this variance reduction method as an agent-environment problem and we give a practical way of parametrizing the correlation matrix $\rho_t$; for computational purposes we restrict ourselves to some sub-class of matrices satisfying the condition $\rho \rho^\top \le I$.
We give numerical experiments for option pricing on some mathematical finance models (Black-Scholes, Heston) and in particular we give examples of non-trivial solutions with non-constant correlation $\rho_t$ that reduce the variance of the estimator.
For the calibration algorithm we focus on policy gradient algorithms but also implement and try more RL-oriented methods, with less success for now however, see Section \ref{sec:rl}.
We give the implementation as the \texttt{Python} package \texttt{relocor}: REinforcement Learning Optimal CORrelation search and with a demonstration notebook available at \textcolor{blue}{\url{https://github.com/Bras-P/relocor}}, where the variance reduction problem is written as an OpenAI \texttt{gym} (or \texttt{gymnasium}) environment \cite{gym}.


\smallskip

\textbf{Notations:} for $x \in \mathbb{R}$ we denote $x_+ := \max(x,0)$. For $a, b \in \mathbb{N}$ we denote $\mathcal{M}_{a,b}(\mathbb{R})$ the set of $a \times b$ real-valued matrices; if $a=b$ we also write $\mathcal{M}_a(\mathbb{R}) = \mathcal{M}_{a,a}(\mathbb{R})$.

\section{Setting and main Algorithms}

\subsection{Stochastic setting}

Let us consider the following $d_1$-dimensional SDE:
\begin{equation}
X_0 \in \mathbb{R}^d, \quad dX_t = b(X_t) dt + \sigma(X_t) dW_t,
\end{equation}
where $b : \mathbb{R}^{d_1} \to \mathbb{R}^{d_1}$, $\sigma : \mathbb{R}^{d_1} \to \mathcal{M}_{d_1,d_2}(\mathbb{R})$ and $W$ is a $\mathbb{R}^{d_2}$-valued standard Brownian motion with filtration $(\mathcal{F}_t)$. Let $f: \mathbb{R}^{d_1} \to \mathbb{R}$ be a test function and $T>0$ a finite time horizon.

We consider the following two processes:
\begin{align}
\label{eq:def_X1}
& dX^1_t = b(X^1_t) dt + \sigma(X^1_t) dW^1_t, \\
\label{eq:def_X2}
& dX^2_t = b(X^2_t) dt + \sigma(X^2_t) dW^2_t,
\end{align}
where $W^1$ and $W^2$ are two correlated Brownian motions. More precisely, for $W^3$ some Brownian motion independent of $W^1$ we write
\begin{align}
\label{eq:def_W2}
W^2_t = \int_0^t \rho_s W^1_s ds + \int_0^s \big( I_{d_2} - \rho_s \rho_s^{\top} \big)^{1/2} W^3_s
\end{align}
where $\rho$ is some $\mathcal{F}$-measurable process with values in $\mathcal{M}_{d_2}(\mathbb{R})$ such that $I_{d_2} - \rho_s \rho_s^{\top}$ is a non-negative matrix.
Then Levy's characterization guarantees that $W^2$ is still a Brownian motion, and then $X^1$ and $X^2$ have same marginal law as $X$.
We recall that every coupling between $X^1$ and $X^2$ can be written as such, see Theorem \ref{thm:sde_coupling} in the Appendix when taking $b_1=b_2$ and $\sigma_1=\sigma_2$.


In order to estimate $f(X_T)$ by Monte Carlo methods, we propose the estimator
$$ Z := \frac{1}{2} \left( f(X^1_T) + f(X^2_T) \right), $$
checking that $\mathbb{E}[Z] = \mathbb{E}[f(X_T)]$. Its variance is
$$ \frac{1}{2} \left( \mathbb{E}[f(X_T)^2] + \mathbb{E}[f(X^1_T) f(X^2_T)] \right). $$

In order to reduce the variance of the estimator $Z$, we consider the following stochastic control problem:
\begin{align}
\label{eq:J_def}
\min_\rho \ \mathbb{E}[f(X^1_T) f(X^2_T)],
\end{align}
where $X^1$ and $X^2$ are defined in \eqref{eq:def_X1} and \eqref{eq:def_X2} and where $\rho$ is a $\mathcal{F}$-measurable process.
More precisely, using the Markov property of the equation we can write $\rho$ as depending only on $t$ and the current state $(X^1_t, X^2_t)$ instead of the whole previous trajectory $(X^1_u, X^2_u)_{0 \le u \le t}$:
\begin{equation}
\rho_t = \rho(X^1_t, X^2_t, t).
\end{equation}

The solution to \eqref{eq:def_X1} and \eqref{eq:def_X2} is approximated by discretization through 
the Euler-Maruyama scheme:
\begin{align}
& t_k = kT/N = kh, \\
\label{eq:X_bar_1}
& X^1_{t_{k+1}} = X^1_{t_k} + hb(X^1_{t_k}) + h^{1/2} \sigma(X^1_{t_k}) \xi^1_{k+1}, \\
& X^2_{t_{k+1}} = X^2_{t_k} + hb(X^2_{t_k}) + h^{1/2} \sigma(X^2_{t_k}) \nonumber \\
\label{eq:X_bar_2}
& \qquad \qquad \cdot \big(\rho_{t_k} \xi^1_{k+1} + (I_{d_2} - \rho_{t_k}\rho_{t_k}^\top)^{1/2} \big)\xi^2_{k+1}
\end{align}
where $\xi^i_k \sim \mathcal{N}(0,I_{d_2})$, $i=1,2$, $k=1,\ldots, N$ and are mutually independent.

\subsection{Agent-Environment setting}

Let us define the environment associated to the SDEs.
The environment state is $s_t = (X^1_t, X^2_t, t)$.
We consider the action of the agent $a_t$ being the correlation matrix given by the policy $\pi$ and we parametrize this policy as a feedforward neural network with weights $\theta$:
$$ a_t = \rho_t = \pi(s_t) = \rho_\theta(s_t) .$$
For some state $s_{t_k}$ and action $\rho_{t_k}$, the next state $s_{t_{k+1}}$ is given by the equations \eqref{eq:X_bar_1} and \eqref{eq:X_bar_2}.
In order to have reward returns all along the trajectory instead of very sparse zero returns except at the last time step, we write the total reward in its telescopic expression and eliminating the constant first term:
\begin{equation}
\label{eq:reward}
R_T {=} \sum_{k=1}^N - \left( f(X^1_{t_k}) f(X^2_{t_k}) - f(X^1_{t_{k-1}}) f(X^2_{t_{k-1}}) \right) {=} \sum_{k=1}^N r_{t_k}.
\end{equation}



\subsection{Parametrization of the correlation matrix}

In order to parametrize the correlation matrix $\rho_t$ in \eqref{eq:def_W2} and satisfying the necessary condition $\rho_t \rho_t^\top \le I_{d_2}$, we use a feedforward neural network of the state input and we consider two different parametrizations of the output:
\begin{itemize}
	\item \texttt{Diag} parametrization: $\rho_t$ is a diagonal matrix with coefficients in the interval $[-1,1]$,
	\item \texttt{Ortho} parametrization: $\rho_t = BDB^\top$ where $D$ is a diagonal matrix with coefficients in $[-1,1]$ and where $B$ is a block-diagonal orthogonal matrix with $2\times 2$ rotations blocks
	$$ \begin{pmatrix}
\alpha_i & -(1-\alpha_i^2)^{1/2} \\ 
(1-\alpha_i^2)^{1/2} & \alpha_i
\end{pmatrix}  $$
for $1 \le i \le \lfloor d_2/2 \rfloor$ and where $\alpha_i \in [0,1]$ and completed by $1$ if $d_2$ is odd.
\end{itemize}
In both cases, we use sigmoid activation to enforce the constraints on the diagonal and rotation coefficients.

\subsection{Algorithms}

We consider Algorithm \ref{algo:pg} to calibrate the weights $\theta$ of the neural network $\rho_\theta$, where $\lambda$ is the step size of the gradient descent, although we also consider other optimizers than SGD such as Adam \cite{adam}. All the operations (state update, reward, gradient descent) are done in parallel on a large batch of trajectories. This algorithm thus requires a batched environment and an extension of the base \texttt{gym} environment \cite{gym}.
After each epoch or episode we may evaluate the variance on a larger batch of trajectories; this step is only for evaluation purposes and is not included in the training time.

\begin{algorithm}
\caption{Policy Gradient}\label{algo:pg}
\begin{algorithmic}
\ForEach{epoch}
	\For{$0 \le k \le N-1$ on a batch of trajectories}
		\State Choose the action $a_{t_k} = \rho_\theta(X^1_{t_k}, X^2_{t_k}, t_k)$.
		\State Update the state $s_{t_{k+1}} = \mathscr{S}(s_{t_k}, a_{t_k})$ according to \eqref{eq:X_bar_1} and \eqref{eq:X_bar_2}.
		\State Get the reward $r_{t_{k+1}} = \mathscr{R}(s_{t_k}, a_{t_k})$ according to \eqref{eq:reward}.
		\State Update the policy: $ \theta \leftarrow \theta + \lambda \nabla_{\theta} \mathscr{R}(s_{t_k}, \rho_\theta(s_{t_k})). $
	\EndFor
\EndFor
\end{algorithmic}
\end{algorithm}

The training of $\rho_\theta$ is a pre-processing step prior to the Monte Carlo simulation. In general this is faster than the corresponding online algorithm consisting in evaluating $\mathbb{E}[f(X_T)]$ at the same time as we train the agent because the training procedure converges faster than the Monte Carlo evaluation.

\section{Experiments}

In the following we will be consider the one-asset or multi-basket models:

\begin{itemize}
	\item The Black-Scholes model:
\begin{align}
\label{eq:black_scholes}
dX_t = b X_t dt + \sigma X_t dW_t 
\end{align}
where $b \in \mathcal{M}_{d_1}(\mathbb{R})$ and $\sigma \in \mathcal{M}_{d_2, d_1}(\mathbb{R})$ are constant.

	\item The Heston model: for even $d_1$, we consider $d_1'=d_1/2$ independent Heston models where the price and volatility processes read for $1 \le i \le d_1'$:
\begin{align}
\label{eq:heston}
& dS_t^i = \sqrt{V_t^i} S_t^i dB^i_t, \\
& dV_t^i = a^i(b^i - V_t^i) dt + \sigma^i \sqrt{V^i_t} dW^i_t
\end{align}
where $B^i$ and $W^i$ are standard Brownian motions with correlation $\tau^i \in [-1,1]$.

\end{itemize}

In each experiment and unless stated otherwise, for both the \texttt{Diag} and the \texttt{Ortho} parametrizations, the correlation is parametrized by a feedforward neural network with two hidden layers with 64 units each and with ReLU activation.

An synthesis of our experimental results is given in Table \ref{tab:all}.

\subsection{Simple optimal correlations for one-asset models}

We first conduct experiments on simple one-asset settings corresponding to the Black-Scholes model \eqref{eq:black_scholes} with $d_1=d_2=1$ and the Heston model \eqref{eq:heston} with $d_1'=1$, where the optimal solution to \eqref{eq:J_def} is simple for some class of payoff functions (monotonous and/or convex). We then check that the algorithms we introduced can find the optimal solution in these simple cases.

For the one-dimensional Black-Scholes model with call payoff, we empirically observe that the optimal correlation $\rho^\star$ is the simple constant solution $\rho^\star_t = -1$, which corresponds to the antithetic scheme \cite{hammersley1956}; we give a theoretical justification in a simplified case in Theorem \ref{thm:antithetic} in the Appendix.

For the Heston model with $d'_1=1$ and call payoff, we empirically observe that the optimal correlation is the constant correlation matrix $\rho^\star_t = \text{Diag}(-1,1)$
that we shall call the "minus-plus agent".

\medskip

We give the results in figure \ref{fig:bs_dim1} for the Black-Scholes model \eqref{eq:black_scholes} with payoff function $f(X_T) = (X_T - K)_+ $ and parameters
\begin{align}
\label{eq:bs_parameters}
b=0.06, \ \sigma=0.3, \ X_0 = 1, \ K = 1, \ T = 1, \ N = 50,
\end{align}
and where we plot the realized variance of the estimator, in comparison with the vanilla estimator and the antithetic estimator. The batch size for training is 512 and the batch size for evaluation at each epoch is $512 \times 16$.

\begin{figure}
\centering
\includegraphics[width=0.5\textwidth]{./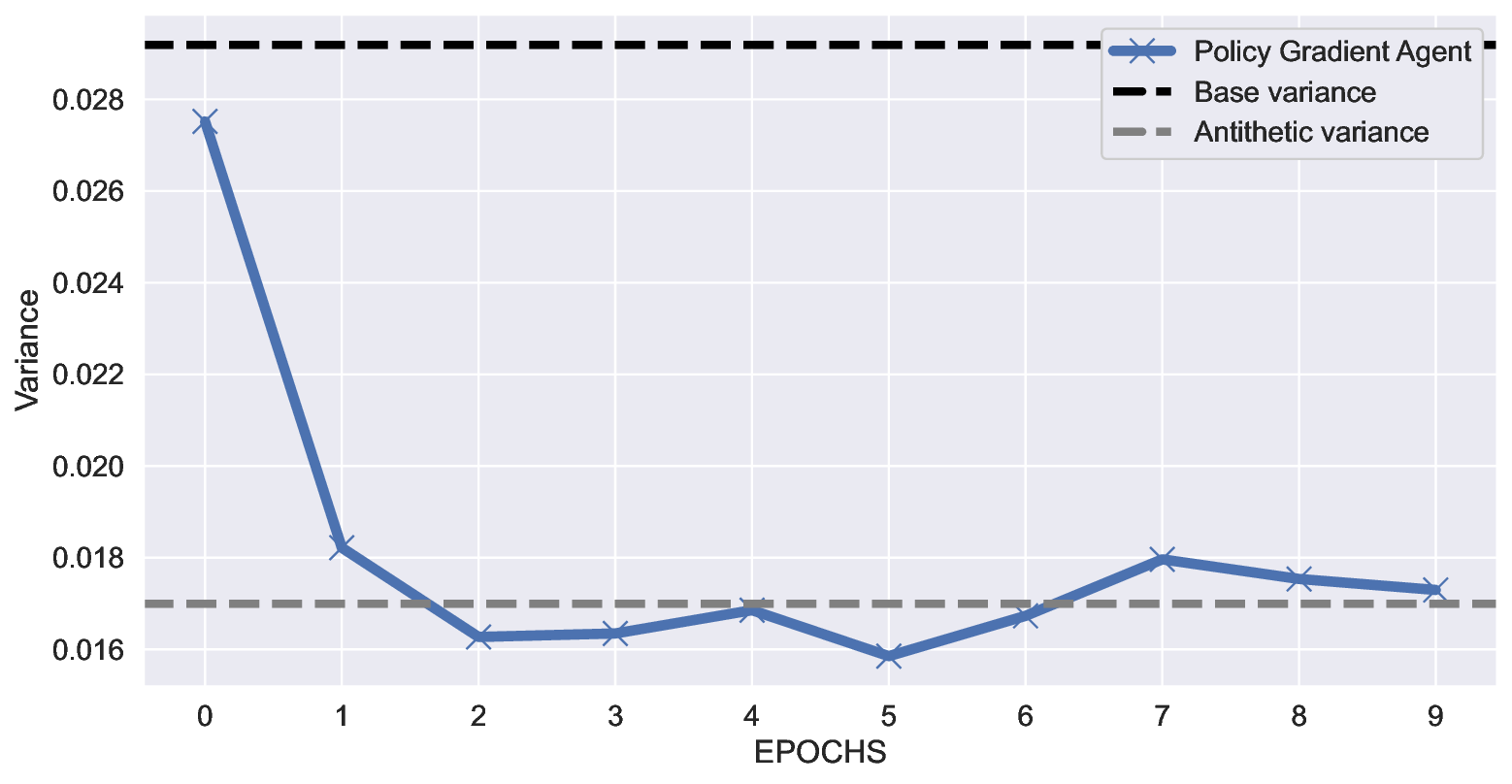}
\caption{Optimal correlation search for the Black-Scholes model with $d_1=1$.}
\label{fig:bs_dim1}
\end{figure}

\medskip

We give the results in figure \ref{fig:heston_dim1} for the Heston model \eqref{eq:heston} with payoff function $f(X_T) = (S_T-K)_+$ and parameters
\begin{align*}
& a=0.5, \ b=0.04, \ \tau = - 0.7, \ \sigma=0.5, \ S_0 = K = 1, \\
& V_0 = 1, \ T=1, \ N=50,
\end{align*}
and where we plot the realized variance of the estimator, in comparison with the vanilla estimator, the antithetic estimator and the minus-plus estimator. The batch size for training is 512 and the batch size for evaluation at each epoch is $512 \times 16$. We take the action to follow the \texttt{Diag} parametrization.

\begin{figure}
\centering
\includegraphics[width=0.5\textwidth]{./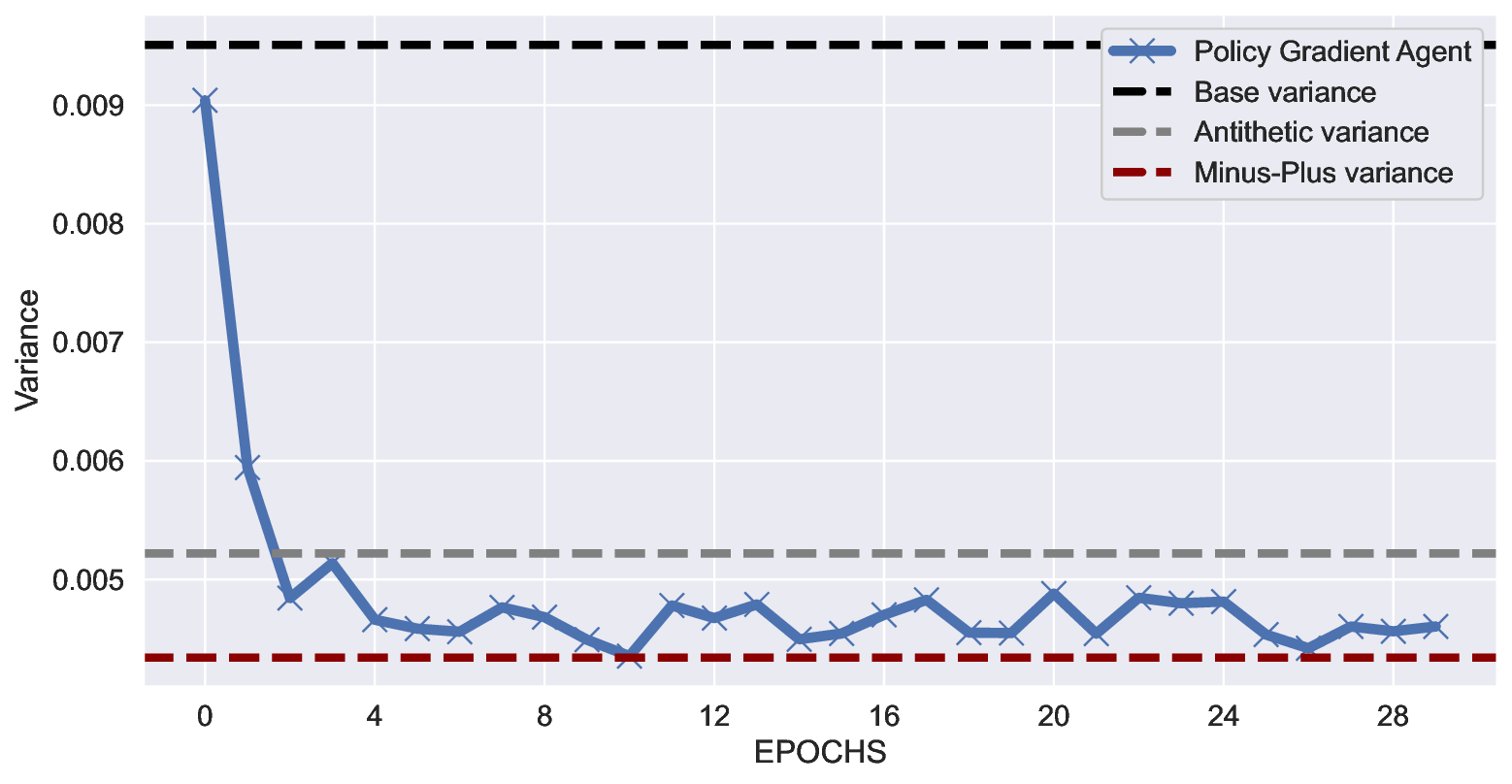}
\caption{Optimal correlation search for the Heston model with $d'_1=1$.}
\label{fig:heston_dim1}
\end{figure}

In both cases our policy gradient method can find the optimal agents and we observe that the trajectories of $\rho_t$ indeed correspond to the antithetic agent and to the minus-plus agent respectively.

\subsection{A non-trivial optimal correlation with a multi-basket model}

We give an example where the solution to \eqref{eq:J_def} is not trivial or constant and where our policy gradient estimator achieves a better variance than usual constant agents, such as antithetic and minus-plus agents.

We consider a multi-asset Black-Scholes model \eqref{eq:black_scholes} with $d_1=d_2=2$ and with the same other parameters as in \eqref{eq:bs_parameters} and payoff $f(X_T) = \langle \alpha, (X_T - K)_+ \rangle$, $\alpha, \ K \in \mathbb{R}^{d_1}$; in our case we take $K=1$ and $\alpha=1/d_1=0.5$.
We give the results in Figure \ref{fig:bs_dim2}. The batch size for training is 512 and the batch size for evaluation at each epoch is $512 \times 16$. We plot the variance for the policy gradient estimators following the \texttt{Diag} and \texttt{Ortho} parametrization respectively.
We observe that the optimal variance only corresponds to the antithetic estimator if taking the \texttt{Diag} parametrization. However if we consider the \texttt{Ortho} parametrization, we obtain non trivial solution and where the rotation is not constant; an example of trajectory under the \texttt{Ortho} parametrization is given in Figure \ref{fig:bs_dim2_trajs}.

\begin{figure}
\centering
\includegraphics[width=0.5\textwidth]{./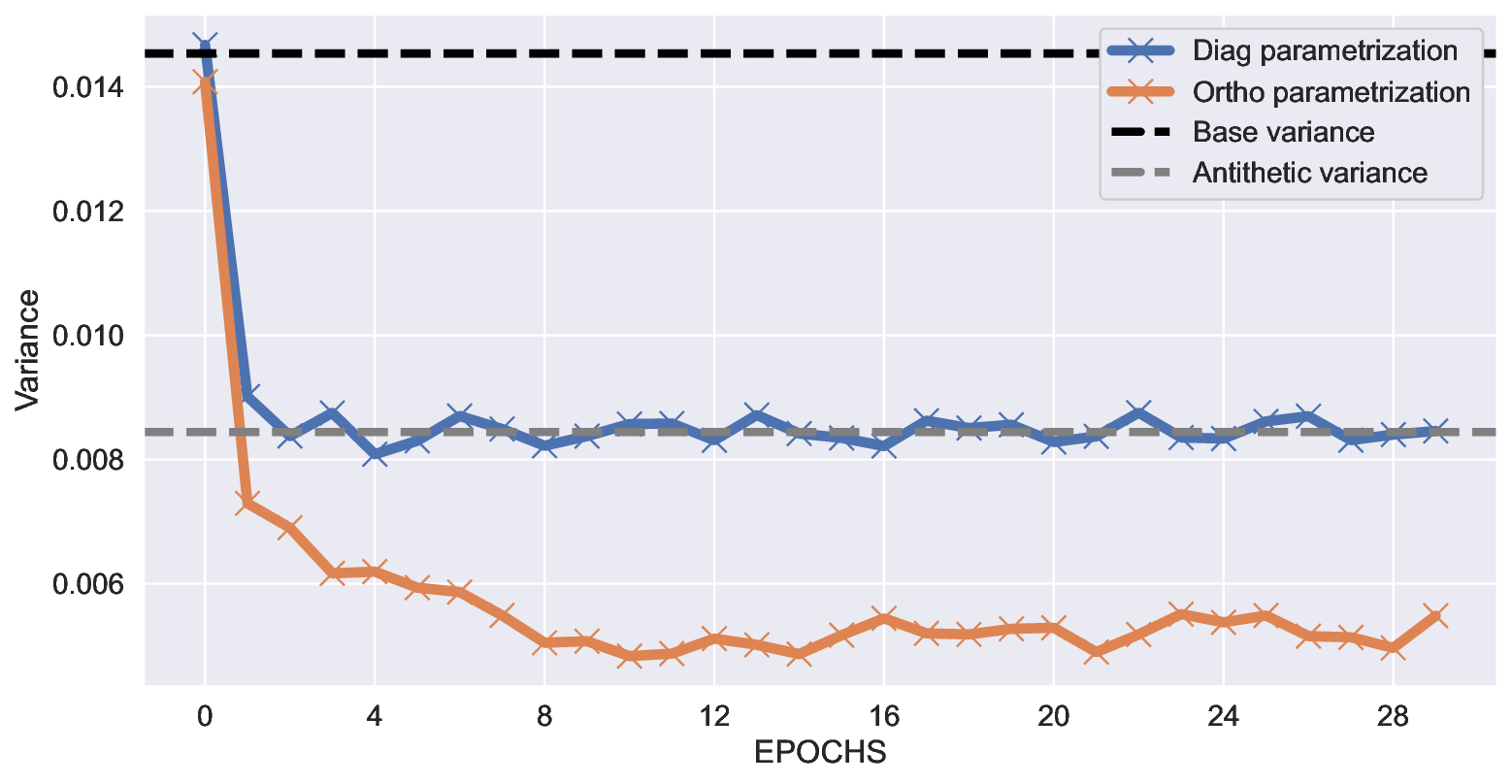}
\caption{Optimal correlation search for the Black-Scholes model with $d_1=2$.}
\label{fig:bs_dim2}
\end{figure}

\begin{figure*}
\centering
\includegraphics[width=0.9\textwidth]{./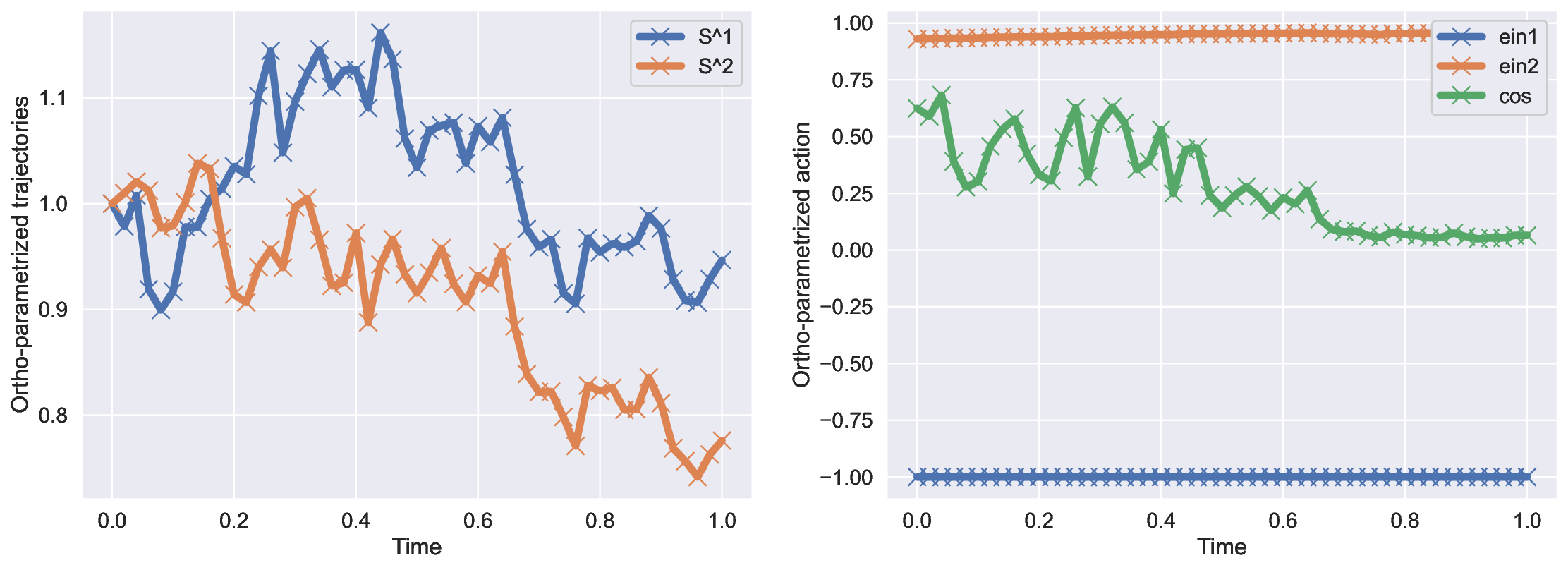}
\caption{Example of trajectory with optimal correlation parametrized by \texttt{Ortho}. On the left we plot the trajectories of the assets $S^1$ and $S^2$; on the right, \texttt{ein1} and \texttt{ein2} are the coefficients of the diagonal matrix and \texttt{cos} is the cosine of the rotation matrix.}
\label{fig:bs_dim2_trajs}
\end{figure*}

\subsection{Higher dimensional multi-basket models}

To prove the interest of our method for higher dimensional SDE models, we consider the Black-Scholes model \eqref{eq:black_scholes} with odd dimension $d_1=d_2=5$; the payoff and parameters are the same as previously.
We plot the results in Figure \ref{fig:bs_dim5} along with the value of the variance for the baseline and the antithetic estimators.

\begin{figure}
\centering
\includegraphics[width=0.45\textwidth]{./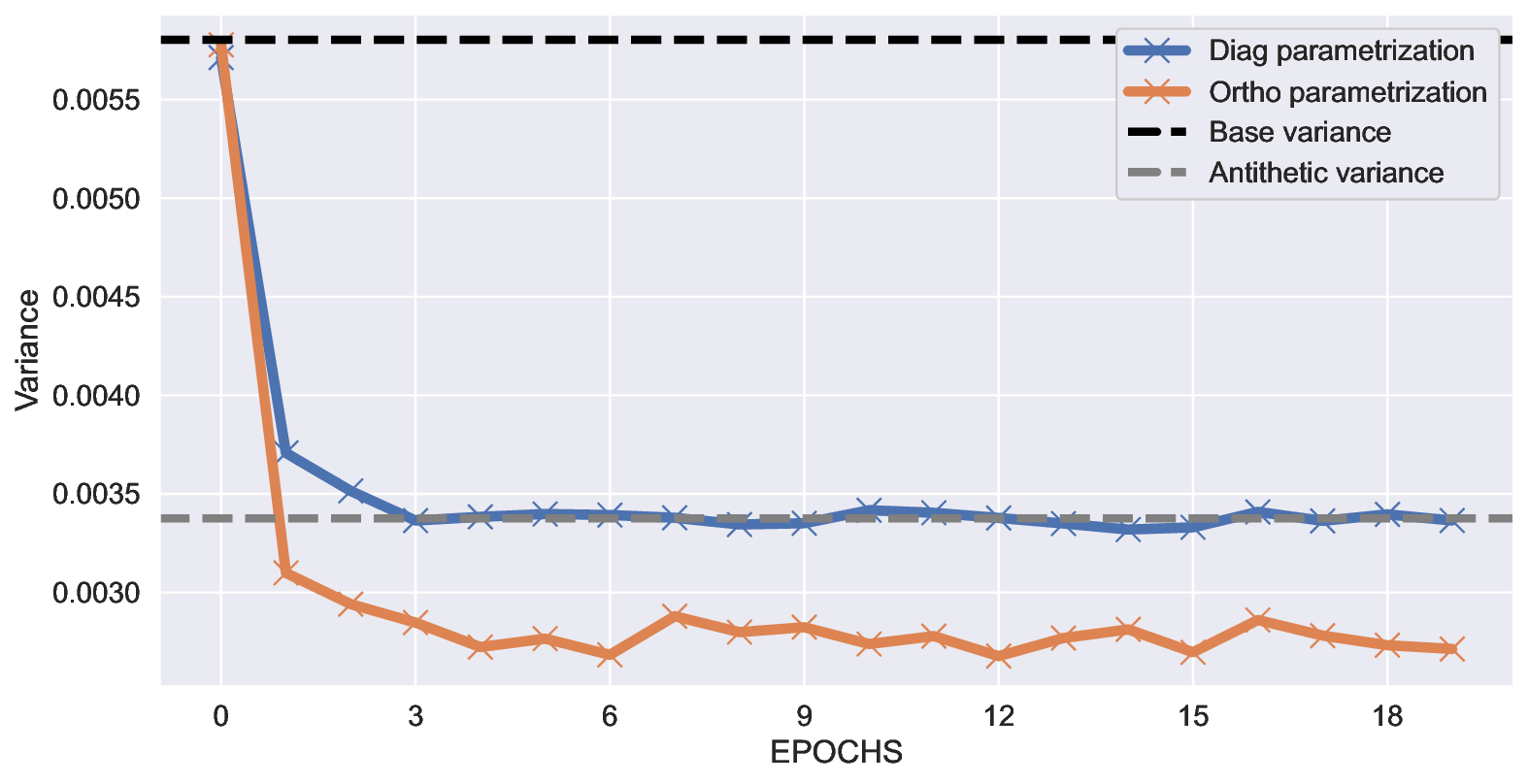}
\caption{Optimal correlation search for the Black-Scholes model with $d_1=5$.}
\label{fig:bs_dim5}
\end{figure}

We also observe that the \texttt{Diag} parametrization cannot achieve a lower variance than the antithetic estimator (corresponding to diagonal coefficients all equal to $-1$), however the \texttt{Ortho} parametrization achieves better results with non-trivial correlation.

\begin{table*}
\centering
\begin{tabular}{c|ccccc}
\hline
\backslashbox{Model}{Agent} & Baseline & Antithetic & Minus-Plus & \texttt{Diag} & \texttt{Ortho} \\ 
\hline
Black-Scholes $d_1=1$ & 2.9190 & 1.6994 & -- & 1.6994 & -- \\
Heston $d_1'=1$ & 0.9507 & 0.5220 & 0.4341 & 0.4341 & 0.4341 \\
Black-Scholes $d_1=2$ & 1.4537 & 0.84426 & 1.8756 & 0.84426 & 0.48761 \\
Black-Scholes $d_1=5$ & 0.58034 & 0.3375 & -- & 0.3375 & 0.2713 \\
\hline
\end{tabular}
\vspace*{1ex}
\caption{Variance values (multiplied by $10^2$ for readability) obtained for reference and trained estimators for the different models.}
\label{tab:all}
\end{table*}

\section{Reinforcement Learning approaches}
\label{sec:rl}

In addition to policy gradient, we tried more RL-oriented algorithms such as A2C \cite{a3c}, PPO \cite{ppo} and TD3 \cite{td3} and we use the algorithms which are already implemented and ready-to-use from Stable Baselines3 implementation \cite{stable-baselines3}.
However we could not get satisfying results with more sophisticated RL algorithms yet; indeed we observe that these last algorithms often converge to a local minimum, typically the antithetic agent for the Black-Scholes model for example, but cannot find the optimal correlation.

We believe the reasons are that SDEs models are more simple and straightforward than control problems generally tackled by RL with very sparse rewards and very long term policy-reward dependencies.
Furthermore, in our case we consider SDEs with homogeneous coefficients.
Another point is the need of a large batch size in stochastic analysis and Monte Carlo methods, as the variance essentially comes from the different possibilities of trajectories of the Brownian motion, whereas RL algorithms usually require a small batch size on the trajectories to work.

We still give our implementation as OpenIA \texttt{gym} environments that are directly usable for verification and further research. The use of these last algorithms and the links between SDE control and Reinforcement Learning remains an open research field \cite{han2016}.

\appendix

\section{Appendix}

\begin{theorem}
\label{thm:ipp_formula}
Let $g: \mathbb{R} \to \mathbb{R}$ being $\mathcal{C}^1$, $Z^1$ and $Z^3 \sim \mathcal{N}(0,1)$ with $Z^1 \indep Z^3$ and
\begin{equation}
\varphi : \rho \in [-1,1] \mapsto \mathbb{E}\big[g(Z^1) g(\rho Z^1 + \sqrt{1-\rho^2} Z^3) \big].
\end{equation}
Then we have
\begin{equation}
\label{eq:varphi_1}
\varphi'(\rho) = \mathbb{E}\big[\varphi'(Z^1) \varphi'(\rho Z^1 + \sqrt{1-\rho^2} Z^3) \big].
\end{equation}
More generally and if $g$ is $\mathcal{C}^k$ we have
\begin{equation}
\label{eq:varphi_k}
\varphi^{(k)}(\rho) = \mathbb{E}\big[\varphi^{(k)}(Z^1) \varphi^{(k)}(\rho Z^1 + \sqrt{1-\rho^2} Z^3) \big]
\end{equation}
\end{theorem}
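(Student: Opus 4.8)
The plan is to recognize \eqref{eq:varphi_1} as an instance of Price's theorem for a bivariate Gaussian vector and to prove it via the parabolic identity satisfied by the Gaussian density. Set $Y_\rho := \rho Z^1 + \sqrt{1-\rho^2}\, Z^3$, so that for $|\rho| < 1$ the pair $(Z^1, Y_\rho)$ is a centered Gaussian vector with covariance $\Sigma_\rho = \begin{pmatrix} 1 & \rho \\ \rho & 1 \end{pmatrix}$; write its density as $p_\rho(x,y) = \frac{1}{2\pi\sqrt{1-\rho^2}} \exp\!\big(-\frac{x^2 - 2\rho xy + y^2}{2(1-\rho^2)}\big)$. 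A direct differentiation of this explicit expression shows that $\partial_\rho p_\rho(x,y) = \partial^2_{xy} p_\rho(x,y)$ for all $(x,y)\in\mathbb{R}^2$ and all $|\rho|<1$ (both sides equal $p_\rho(x,y)$ multiplied by the same rational function of $x,y,\rho$, namely $\big(\rho(1-\rho^2) - \rho(x^2+y^2) + xy(1+\rho^2)\big)/(1-\rho^2)^2$). This is the only genuinely computational step.

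Next I would write $\varphi(\rho) = \int_{\mathbb{R}^2} g(x)g(y)\, p_\rho(x,y)\, dx\, dy$ and differentiate under the integral sign. For $\rho$ in a compact subinterval $[-1+\varepsilon,1-\varepsilon]$ of $(-1,1)$, the quantities $|\partial_\rho p_\rho(x,y)|$ are dominated by a fixed integrable function times a polynomial in $(x,y)$, so dominated convergence applies provided $g$ has at most polynomial growth (an implicit hypothesis needed merely for the stated expectations to be finite). This yields $\varphi'(\rho) = \int g(x)g(y)\, \partial_\rho p_\rho(x,y)\, dx\, dy = \int g(x)g(y)\, \partial^2_{xy} p_\rho(x,y)\, dx\, dy$. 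I then integrate by parts once in $x$ and once in $y$; since $g\in\mathcal{C}^1$ and $p_\rho$ together with its first partial derivatives decays faster than any polynomial grows, the boundary terms vanish and we obtain $\varphi'(\rho) = \int g'(x)g'(y)\, p_\rho(x,y)\, dx\, dy = \mathbb{E}\big[g'(Z^1)\, g'(Y_\rho)\big]$, which is \eqref{eq:varphi_1} (the primes inside the expectation being those of $g$). The endpoint values $\rho = \pm 1$ follow by continuity of both sides.

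For \eqref{eq:varphi_k} when $g\in\mathcal{C}^k$, I would proceed by induction on $j \le k$: assuming $\varphi^{(j)}(\rho) = \mathbb{E}\big[g^{(j)}(Z^1)\, g^{(j)}(Y_\rho)\big]$, apply the case \eqref{eq:varphi_1} with $g$ replaced by $g^{(j)} \in \mathcal{C}^1$ (legitimate since $g\in\mathcal{C}^{j+1}$), which differentiates the right-hand side into $\mathbb{E}\big[g^{(j+1)}(Z^1)\, g^{(j+1)}(Y_\rho)\big]$, hence $\varphi^{(j+1)}(\rho) = \mathbb{E}\big[g^{(j+1)}(Z^1)\, g^{(j+1)}(Y_\rho)\big]$.

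I expect the main obstacle to be purely the analytic bookkeeping: making precise the growth/integrability assumption on $g$ and its derivatives under which differentiation under the integral and the two integrations by parts are valid (the theorem is stated without such hypotheses), and dealing with the degeneracy of the coupling at $\rho=\pm1$, where $\sqrt{1-\rho^2}=0$ — both handled as above, the latter by continuity. An alternative route that avoids the density computation is to expand $g = \sum_n a_n H_n$ in normalized Hermite polynomials, use Mehler's formula $\mathbb{E}[H_m(Z^1)H_n(Y_\rho)] = \rho^n \delta_{mn}$ to get $\varphi(\rho) = \sum_n a_n^2 \rho^n$, and note that $g' = \sum_n a_n\sqrt{n}\, H_{n-1}$, so $\mathbb{E}[g'(Z^1)g'(Y_\rho)] = \sum_n n\, a_n^2 \rho^{n-1} = \varphi'(\rho)$, with term-by-term differentiation justified for $|\rho|<1$; this works whenever $g$ belongs to $L^2$ of the standard Gaussian, and iterating gives \eqref{eq:varphi_k}.
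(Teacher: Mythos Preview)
Your argument is correct and complete, but it follows a genuinely different route from the paper. The paper differentiates $\varphi$ by applying the chain rule directly to the random argument $\rho Z^1+\sqrt{1-\rho^2}\,Z^3$, obtaining two expectations carrying the factors $Z^1$ and $Z^3$; it then applies Stein's identity $\mathbb{E}[Z\phi(Z)]=\mathbb{E}[\phi'(Z)]$ in each variable, which produces extra $g''$ terms that cancel and leaves exactly $\mathbb{E}[g'(Z^1)g'(\rho Z^1+\bar\rho Z^3)]$. Your approach instead works on the joint density via the heat-type identity $\partial_\rho p_\rho=\partial^2_{xy}p_\rho$ (Price's theorem) and two integrations by parts, with the Hermite/Mehler expansion as an alternative. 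The paper's route is purely probabilistic and avoids writing the bivariate density; yours is more analytic but has the advantage of matching the stated $\mathcal{C}^1$ hypothesis exactly, whereas the paper's computation uses $g''$ in intermediate steps. Both handle the higher-order statement the same way, by iterating the $k=1$ case on $g^{(k-1)}$.
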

\begin{proof}
Let $\bar{\rho} := \sqrt{1-\rho^2}$. Then
\begin{align*}
\varphi'(\rho) & = \mathbb{E}[g(Z^1) g'(\rho Z^1 + \bar{\rho}Z^3) Z^1] \\
& \quad - (\rho/\bar{\rho}) \mathbb{E}[g(Z^1) g'(\rho Z^1 + \bar{\rho}Z^3) Z^3 ].
\end{align*}
On the one side by integration by parts we have
\begin{align*}
& \mathbb{E}\big[ g(Z^1) g'(\rho Z^1 + \bar{\rho}Z^3) Z^1 \big] = \mathbb{E}\big[ g'(Z^1) g'(\rho Z^1 + \bar{\rho} Z^3) \big] \\
& \quad + \rho \mathbb{E}\big[ g(Z^1) g''(\rho Z^1 + \bar{\rho} Z^3) \big]
\end{align*}
and on the other hand
\begin{align*}
& \mathbb{E} \big[ g(Z^1) g'(\rho Z^1 + \bar{\rho} Z^3) Z^3 \big] = \mathbb{E} \big[g(Z^1) \bar{\rho} g''(\rho Z^1 + \bar{\rho} Z^3) \big]
\end{align*}
where we used the Stein formula, stating that for every differentiable function $\phi:\mathbb{R}\to \mathbb{R}$ and $Z \sim \mathcal{N}(0,1)$ we have $\mathbb{E}[\phi'(Z)] = \mathbb{E}[Z \phi(Z)]$.
Equation \eqref{eq:varphi_k} is then obtained recursively by applying \eqref{eq:varphi_1} on $g^{(k-1)}$.
\end{proof}

\begin{theorem}
\label{thm:antithetic}
Let us consider \eqref{eq:black_scholes} with $d_1=d_2=1$. Then the solution of \eqref{eq:J_def} with non-decreasing test function $f$ and while considering $\rho$ being constant, is $\rho^\star = -1$.
\end{theorem}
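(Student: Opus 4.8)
\textit{Proof sketch.}
The plan is to reduce the quantity to be minimized to the one-dimensional Gaussian functional $\varphi$ of Theorem~\ref{thm:ipp_formula} and to show that, for non-decreasing $f$, $\varphi$ is non-decreasing on $[-1,1]$, so that its minimum over constant $\rho$ is attained at the left endpoint $\rho^\star=-1$, which is precisely the antithetic scheme $W^2=-W^1$. Concretely, for \eqref{eq:black_scholes} with $d_1=d_2=1$ the SDE has the closed form $X_T=X_0\exp\big((b-\sigma^2/2)T+\sigma W_T\big)$, so with a constant correlation $\rho$ and writing $W^1_T=\sqrt{T}\,Z^1$, $W^3_T=\sqrt{T}\,Z^3$ for independent $Z^1,Z^3\sim\mathcal{N}(0,1)$, one has $f(X^1_T)=g(Z^1)$ and $f(X^2_T)=g(\rho Z^1+\sqrt{1-\rho^2}\,Z^3)$, where $g(z):=f\big(X_0\exp((b-\sigma^2/2)T+\sigma\sqrt{T}\,z)\big)$. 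Since $z\mapsto X_0 e^{(b-\sigma^2/2)T+\sigma\sqrt{T}z}$ is monotone and $f$ is non-decreasing, $g$ is monotone, and in particular $g'(u)g'(v)\ge 0$ for all $u,v$. The objective in \eqref{eq:J_def} is then exactly $\varphi(\rho)=\mathbb{E}\big[g(Z^1)\,g(\rho Z^1+\sqrt{1-\rho^2}\,Z^3)\big]$.

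Assume first that $f$, hence $g$, is $\mathcal{C}^1$ with polynomial growth (so that all expectations are finite, the law of $X_T$ being log-normal with moments of every order). Then Theorem~\ref{thm:ipp_formula}, and in particular \eqref{eq:varphi_1} applied to $g$, gives for $\rho\in(-1,1)$
\[
\varphi'(\rho)=\mathbb{E}\big[g'(Z^1)\,g'(\rho Z^1+\sqrt{1-\rho^2}\,Z^3)\big]\ \ge\ 0,
\]
because the integrand is a product of two quantities of the same sign. Hence $\varphi$ is non-decreasing on $(-1,1)$, and being continuous on $[-1,1]$ by dominated convergence it is non-decreasing on all of $[-1,1]$; therefore $\min_{\rho\in[-1,1]}\varphi(\rho)=\varphi(-1)$, i.e. $\rho^\star=-1$.

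To remove the smoothness assumption and cover payoffs such as the call $f(x)=(x-K)_+$, I would approximate: let $f_\varepsilon$ be a mollification of $f$, which is $\mathcal{C}^\infty$, still non-decreasing, and satisfies $f_\varepsilon\to f$ pointwise with $|f_\varepsilon|\le\sup_{|u|\le\varepsilon}|f(\,\cdot+u)|$, a bound of at most linear growth in the call case. Each associated $\varphi_\varepsilon$ is non-decreasing by the previous step, and $\varphi_\varepsilon(\rho)\to\varphi(\rho)$ for every $\rho\in[-1,1]$ by dominated convergence, the dominating function coming from the moment bounds on $X_T$. A pointwise limit of non-decreasing functions is non-decreasing, so $\varphi$ is non-decreasing on $[-1,1]$ and $\rho^\star=-1$ follows. (If $\sigma=0$ the statement is trivial, and the sign convention $\sigma>0$ versus $\sigma<0$ is irrelevant since only the product $g'(u)g'(v)\ge0$ is used.)

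I expect the main obstacle to be the regularity and limiting argument at the two points where it intervenes: first, justifying that Theorem~\ref{thm:ipp_formula} applies — the integrability of $g(Z^1)g'(\cdots)$ and the differentiation under the expectation in its proof, which divides by $\sqrt{1-\rho^2}$ and is therefore only immediate on the open interval, continuity at $\rho=\pm1$ being recovered separately; second, propagating monotonicity through the mollification with the required uniform integrability. The core inequality $\varphi'\ge0$ itself is immediate once the derivative formula \eqref{eq:varphi_1} is available, since $g$ is monotone.
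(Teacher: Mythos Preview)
Your approach is essentially the same as the paper's: reduce the objective to the Gaussian functional $\varphi(\rho)=\mathbb{E}[g(Z^1)g(\rho Z^1+\sqrt{1-\rho^2}Z^3)]$ via the closed-form solution of \eqref{eq:black_scholes}, apply Theorem~\ref{thm:ipp_formula} to obtain $\varphi'(\rho)=\mathbb{E}[g'(Z^1)g'(\rho Z^1+\bar\rho Z^3)]\ge 0$ from the monotonicity of $g$, and conclude that the minimum is at $\rho^\star=-1$. The paper's proof is terser and does not spell out the regularity, endpoint-continuity, or mollification issues you carefully flag, but the core argument is identical.
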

\begin{proof}
We recall that the solution of \eqref{eq:black_scholes} is $X_t = \exp((b-\sigma^2/2)t + \sigma W_t$.
We define
\begin{align*}
\varphi: \rho \in [-1,1] & \mapsto \mathbb{E} \big[ f(e^{(b-\sigma^2/2)T + \sigma W^1_T}) f(e^{(b-\sigma^2/2)T + \sigma W^2_T}) \big] \\
& = \mathbb{E}\big[ g(Z^1) g(\rho Z^1 + \sqrt{1-\rho^2} Z^3) \big]
\end{align*}
where $Z^1, Z^3 \sim \mathcal{N}(0,1)$ are independent and $g(x) = f \circ \exp((b-\sigma^2/2)T + \sigma \sqrt{T} x)$ is non-decreasing. Using Theorem \ref{thm:ipp_formula} we obtain
$$ \varphi(\rho) = \mathbb{E} \big[ g'(Z^1) g'(\rho Z^1 + \bar{\rho} Z^3) \big] $$
and that if $g$ is non-decreasing then $\varphi$ is non-decreasing on $[-1,1]$ and reaches its minimum in $\rho=-1$.
\end{proof}

\begin{remark}
If 
$$ \varphi'(-1) = \mathbb{E}[ g'(Z^1) g'(-Z^1) ] < 0 ,$$
for example taking $f(x)=\sigma^{-2}T^{-1}(\log(x)-\mu T)^2$ with $\mu:=b-\sigma^2/2$ giving $g(x)=x^2$ and $\varphi'(-1)=-\mathbb{E}[(Z^1)^2]$, then the minimum of $\varphi$ is not reached in $-1$ so that there exists an agent giving a better variance reduction than the antithetic method.
\end{remark}

\begin{theorem}
\label{thm:sde_coupling}
Let us consider two general SDEs
\begin{align*}
& dY^1_t = b_1(Y^1_t) dt + \sigma_1(Y^1_t) dW^1_t \\
& dY^2_t = b_2(Y^2_t) dt + \sigma_2(Y^2_t) dW^2_t.
\end{align*}
Then any coupling between $Y^1$ and $Y^2$ can be written through the coupling of the two Brownian motions $W^1$ and $W^2$ with
\begin{align}
dW^2_t =  \rho_t dW^1_t + (I_{d_2} - \rho_t \rho_t^\top)^{1/2} dW^3_t,
\end{align}
where $W^1$ and $W^3$ are independent and $\rho : [0,T] \to \mathcal{M}_{d_2}(\mathbb{R})$ is an adapted process such that for every $t$, $\rho_t \rho_t^\top \le I_{d_2}$; furthermore $\rho_t$ only depends on $(Y^1_t, Y^2_t, t)$.
\end{theorem}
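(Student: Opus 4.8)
The plan is to split the statement into two parts: first, representing an arbitrary coupling through a correlation process $\rho$ that is merely $\mathcal{F}$-adapted with $\rho_t\rho_t^\top\le I_{d_2}$; and second, upgrading $\rho$ to feedback form $\rho_t=\rho(Y^1_t,Y^2_t,t)$. For the first part I would start from a coupling realized on a filtered space carrying $Y^1,Y^2$ with the prescribed laws and, after enlarging the space if the $\sigma_i$ are not invertible, the driving Brownian motions $W^1,W^2$. By the Kunita–Watanabe (matrix Cauchy–Schwarz) inequality for brackets, each entry of $\langle W^1,(W^2)^\top\rangle$ is absolutely continuous in $t$ with $|d\langle W^1_k,W^2_j\rangle_t|\le dt$, so there is a predictable $\mathcal{M}_{d_2}(\mathbb{R})$-valued $\rho$ with $d\langle W^1,(W^2)^\top\rangle_t=\rho_t^\top\,dt$. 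Setting $M_t:=W^2_t-\int_0^t\rho_s\,dW^1_s$, a direct bracket computation gives $\langle W^1,M^\top\rangle\equiv 0$ and $\langle M\rangle_t=\int_0^t(I_{d_2}-\rho_s\rho_s^\top)\,ds$; since the left-hand side has nonnegative-definite increments, $\rho_s\rho_s^\top\le I_{d_2}$ for $ds\otimes d\mathbb{P}$-a.e.\ $(s,\omega)$, and $\rho$ may be redefined (say to $0$) on the exceptional set.

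To then write $M=\int_0^{\cdot}(I_{d_2}-\rho_s\rho_s^\top)^{1/2}\,dW^3_s$ I would invoke the classical representation of a continuous local martingale with absolutely continuous bracket as a stochastic integral against a Brownian motion: after a further independent enlargement of the space one constructs a process $W^3$ with $\langle W^3\rangle_t=t\,I_{d_2}$ — hence a Brownian motion by Lévy's characterization — satisfying the stated integral identity, and with $\langle W^1,(W^3)^\top\rangle\equiv 0$ inherited from $\langle W^1,M^\top\rangle\equiv 0$ together with the independence of the added noise. Applying Lévy's characterization jointly to $(W^1,W^3)$ shows $(W^1,W^3)$ is a $2d_2$-dimensional Brownian motion, so $W^1$ and $W^3$ are independent. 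This establishes the asserted representation of $W^2$, which combined with \eqref{eq:def_X1}–\eqref{eq:def_X2} gives the first half of the theorem.

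For the feedback refinement I would apply the mimicking theorem of Gyöngy (in the Brunick–Shreve form that dispenses with ellipticity and boundedness) to the $2d_1$-dimensional Itô process $(Y^1,Y^2)$. Its drift $(y^1,y^2)\mapsto(b_1(y^1),b_2(y^2))$ and the diagonal blocks $\sigma_i(y^i)\sigma_i(y^i)^\top$ of its diffusion matrix are already deterministic functions of the current state, while the off-diagonal block is $\sigma_1(Y^1_t)\rho_t^\top\sigma_2(Y^2_t)^\top$. The mimicking construction replaces the diffusion matrix by its conditional expectation given $(Y^1_t,Y^2_t)$: the drift and diagonal blocks are unchanged, and the off-diagonal block becomes $\sigma_1(y^1)\,\bar\rho(y^1,y^2,t)^\top\,\sigma_2(y^2)^\top$ with $\bar\rho(y^1,y^2,t):=\mathbb{E}[\rho_t\mid Y^1_t=y^1,Y^2_t=y^2]$. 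The resulting weak solution has the same one-dimensional time marginals $\mathrm{Law}(Y^1_t,Y^2_t)$ as the original coupling — in particular the same $\mathrm{Law}(Y^1_T,Y^2_T)$, which is all that \eqref{eq:J_def} depends on — each component still solves its own SDE, and its correlation is the feedback process $\bar\rho_t=\bar\rho(Y^1_t,Y^2_t,t)$. Finally $\bar\rho_t\bar\rho_t^\top\le I_{d_2}$, because $\mathbb{E}[(\rho_t-\bar\rho_t)(\rho_t-\bar\rho_t)^\top\mid Y^1_t,Y^2_t]\ge 0$ gives $\bar\rho_t\bar\rho_t^\top\le\mathbb{E}[\rho_t\rho_t^\top\mid Y^1_t,Y^2_t]\le I_{d_2}$.

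The main obstacle is this second part: the mimicking theorem is delicate exactly when the $\sigma_i$ are degenerate or the coefficients unbounded, and one must be precise about the sense in which the reduction holds — it equates the time marginals, not the pathwise law, so the honest reading of ``furthermore $\rho_t$ only depends on $(Y^1_t,Y^2_t,t)$'' is that, up to replacing the coupling by one with identical marginals, $\rho$ may be taken of feedback form (which suffices for the variance-reduction application). The first part is by contrast routine: a stacking of bracket computations, the Kunita–Watanabe inequality, the local-martingale-as-Brownian-integral lemma, and Lévy's characterization, the only mild care being the enlargements of the probability space needed to realize $W^1$ and $W^3$ when the diffusion coefficients fail to be invertible.
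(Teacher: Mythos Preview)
Your proposal is correct and takes a genuinely different route from the paper. The paper does not attempt the continuous-time statement at all: it explicitly restricts to the Euler--Maruyama scheme and argues that an adapted coupling of the Gaussian increments $(\xi^1_{k+1},\xi^2_{k+1})$ can be written as $\xi^2_{k+1}=\rho_{t_k}\xi^1_{k+1}+(I_{d_2}-\rho_{t_k}\rho_{t_k}^\top)^{1/2}\xi^3_{k+1}$, and then invokes ``the Markov property'' to conclude that $\rho_{t_k}$ depends only on $(Y^1_{t_k},Y^2_{t_k},t_k)$. Your approach---brackets, Kunita--Watanabe, the local-martingale-as-Brownian-integral lemma and L\'evy's characterization for the adapted representation, followed by Gy\"ongy/Brunick--Shreve mimicking for the feedback reduction---is the honest continuous-time argument. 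It is also more careful: the paper's discrete step glosses over the fact that two standard normals can be coupled non-linearly (e.g.\ $\xi^2=\xi^1\mathds{1}_{|\xi^1|\le c}-\xi^1\mathds{1}_{|\xi^1|>c}$) without the joint being Gaussian, so the linear representation is not available for \emph{every} coupling at the increment level, and its Markov-property step effectively presupposes that the coupled pair is already Markov. What the paper's approach buys is brevity and direct alignment with the scheme actually simulated; what yours buys is a bona fide proof of the stated continuous-time theorem, together with the honest caveat---absent from the paper---that the feedback reduction via mimicking preserves only the one-dimensional time marginals of $(Y^1,Y^2)$ rather than the full path law, which is exactly what the objective \eqref{eq:J_def} depends on.
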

\begin{proof}
We only prove for the corresponding Euler-Maruyama scheme. Coupling the two schemes $Y^1$ and $Y^2$ turns to choosing an adapted coupling of the Brownian increments $(\xi^1_1, \ldots, \xi^1_N)$ and $(\xi^2_1, \ldots, \xi^2_N)$ where $\xi^i \sim \mathcal{N}(0,I_{d_2})^{\otimes{N}}$. For every $0 \le k \le N$ and conditionally to $\mathcal{F}_{t_k}$, the law of $(Y^1_{t_{k+j}}, Y^2_{t_{k+j}})_{1 \le j \le N-k}$ only depends on $(Y^1_{t_k}, Y^2_{t_k})$ by the Markov property so $\rho_{t_k}$ only depends on $(Y^1_{t_k}, Y^2_{t_k}, t_k)$. Then two correlated centred Gaussian variables can be written through linear combinations of independent variables: $\xi^2_{k+1} = \rho_{t_k} \xi^1_{k+1} + \omega_{t_k} \xi^3_{k+1}$ where $\rho_{t_k}$ and $\omega_{t_k}$ are $d_2 \times d_2$ matrices and $\xi^3$ is independent of $\xi^1$. The one-variance condition on $\xi^2$ implies that $\omega_{t_k} = (I_{d_2} - \rho_{t_k}\rho_{t_k}^\top)^{1/2}$, implying also that $\rho_{t_k} \rho_{t_k}^\top \le I_{d_2}$.
\end{proof}

\begin{remark}[Link with Maximum Optimal Coupling]
\label{rk:MOT}
Our variance reduction problem consists in correlating two SDEs $X^1$ and $X^2$ with same SDE coefficients; following Theorem \ref{thm:sde_coupling} this boils down to finding an optimal correlation function $\rho(X^1_t, X^2_t, t)$. The minimization problem \eqref{eq:J_def} can be rewritten as the maximization of the $L^2$ transport cost, as
\begin{align*}
\mathbb{E}|f(X^1_T) - f(X^2_T)|^2 = 2\mathbb{E}|f(X_T)|^2 - 2\mathbb{E}[f(X^1_T) f(X^2_T)].
\end{align*}
\end{remark}



\end{document}